\definecolor{royalblue}{RGB}{65, 105, 225}
\definecolor{seagreen}{RGB}{46, 139, 87}
\definecolor{firebrick}{RGB}{178,34,34}
\definecolor{darkviolet}{RGB}{138, 43, 226}
\definecolor{carrotorange}{RGB}{237, 145, 33}
\pgfplotsset{every tick label/.append style={font=\small}}
\title{\LARGE \bf Unbounded Gradients in Federated Leaning with Buffered Asynchronous Aggregation}
\author{Mohammad Taha Toghani and C\'{e}sar A. Uribe
\thanks{The authors are with the Department of Electrical and Computer Engineering, Rice University, 6100 Main St, Houston, TX 77005, USA, \{\href{mailto:mttoghani@rice.edu}{mttoghani}, \href{mailto:cauribe@rice.edu}{cauribe}\}@rice.edu. This work was partially funded by \mbox{ARPA-H} Strategic Initiative Seed Fund \#916012. Part of this material is based upon work supported by the National Science Foundation under Grants \#2211815 and No. \#2213568.}
}
\begin{document}
\maketitle

\begin{abstract}
Synchronous updates may compromise the efficiency of cross-device federated learning once the number of active clients increases. The \textit{FedBuff} algorithm (Nguyen et al.~\cite{nguyen2022federated}) alleviates this problem by allowing asynchronous updates (staleness), which enhances the scalability of training while preserving privacy via secure aggregation. We revisit the \textit{FedBuff} algorithm for asynchronous federated learning and extend the existing analysis by removing the boundedness assumptions from the gradient norm. This paper presents a theoretical analysis of the convergence rate of this algorithm when heterogeneity in data, batch size, and delay are considered.
\end{abstract}

\section{Introduction}\label{sec:introduction}

Federated learning (FL) is an approach in machine learning theory and practice that allows training models on distributed data sources~\cite{mcmahan2017communication,konevcny2016federated}. The distributed structure of FL has numerous benefits over traditional centralized methods, including parallel computing, efficient storage, and improvements in data privacy. However, this framework also presents communication efficiency, data heterogeneity, and scalability challenges. Several works have been proposed to improve the performance of FL~\cite{kairouz2019advances, fallah2020personalized, assran2020advances}. Existing works usually address a subset of these challenges while imposing additional constraints or limitations in other aspects. For example, the work in~\cite{kairouz2021distributed} shows a trade-off between privacy, communication efficiency, and accuracy gains for the distributed discrete Gaussian mechanism for FL with secure aggregation.

One of the most important advantages of FL is scalability. Training models on centralized data stored on a single server can be problematic when dealing with large amounts of data. Servers may be unable to handle the load, or clients might refuse to share their data with a third party. In FL, the data is distributed across many devices, potentially improving data privacy and computation scalability. However, this also presents some challenges. First, keeping the update mechanism synchronized across all devices may be very difficult when the number of clients is large~\cite{xie2019asynchronous}. Second, even if feasible, imposing synchronization results in huge (unnecessary) delays in the learning procedure~\cite{assran2020advances}. Finally, each client often might have different data distributions, which can impact the convergence of algorithms~\cite{khaled2020tighter,wang2020tackling}.

In synchronous FL, e.g., FedAvg~\cite{konevcny2016federated,mcmahan2017communication}, the server first sends a copy of the current model to each client. The clients then train the model locally on their private data and send the model updates back to the server. The server then aggregates the client updates to produce a new shared model. The process is repeated for many rounds until the shared model converges to the desired accuracy. However, the existence of delays, message losses, and stragglers hinders the performance of distributed learning. Several works have been proposed to improve the scalability of federated/distributed learning via enabling asynchronous communications~\cite{li2019asynchronous,feyzmahdavian2016asynchronous,xie2019asynchronous,niwa2021asynchronous,assran2020advances,koloskova2022sharper,mishchenko2022asynchronous}. In the majority of these results, each client immediately communicates the parameters to the server after applying a series of local updates. The server updates the global parameter once it receives any client update. This has the benefit of reducing the training time and better scalability in practice and theory~\cite{niu2011hogwild,assran2020advances,feyzmahdavian2016asynchronous,mishchenko2022asynchronous} since the server can start aggregating the client updates as soon as they are available.

The setup, known as ``vanilla'' asynchronous FL, has several challenges that must be addressed. First, due to the nature of asynchronous updates, the clients are supposed to deal with staleness, where the client updates are not up-to-date with the current model on the server~\cite{nguyen2022federated}. Moreover, the asynchronous setup may imply potential risks for privacy due to the lack of secure aggregation, i.e., the immediate communication of every single client to the server~\cite{bonawitz2016practical,chen2021communication}. In~\cite{nguyen2022federated}, the authors proposed an algorithm called federated learning with buffered asynchronous aggregation (\textit{FedBuff}), which modifies pure asynchronous FL by enabling secure aggregation while clients perform asynchronous updates. This novel method is considered a variant of asynchronous FL while serving as an intermediate approach between synchronous and asynchronous FL.

\begin{table*}
	\caption{Comparison of the characteristics considered in our analysis with relevant works for federated learning for \textbf{smooth \& non-convex} objective functions. Parameter~$\tau$ denotes the maximum delay.}\label{tab:comparison}
	\resizebox{\textwidth}{!}{
	\centering
	\begin{tabular}{llcccl}
	\toprule
	\multirow{2}{*}{\bf Algorithm} & \multirow{2}{*}{\bf Reference} & \bf Asynchronous & \bf Buffered & \bf Unbounded & \bf Convergence\\
	& & \bf Update & \bf Aggregation & \bf Gradient &  \bf \hspace{1.5em} Rate \\
	\midrule
    & McMahan et al.~\cite{mcmahan2017communication}
	& \xmark & \cmark & - & - \\
	\cmidrule(r){2-6}
    FedAvg& Yu et al.~\cite{yu2019parallel} & \xmark & \cmark & \xmark & $\mcO\left(\frac{1}{\sqrt{T}}\right)$\\
	\cmidrule(r){2-6}
	& Wang et al.~\cite{wang2020tackling} & \xmark & \cmark & \cmark & $\mcO\left(\frac{1}{\sqrt{T}}\right)$\\
	\midrule
	FedAsync & Xie et al.~\cite{xie2019asynchronous} & \cmark & \xmark & \xmark & $\mcO\left(\frac{1}{\sqrt{T}}\right) + \mcO\left(\frac{\tau^2}{T}\right)$\\
	\midrule
	\multirow{2}{*}{FedBuff} & Nguyen et al.~\cite{nguyen2022federated} & \cmark & \cmark & \xmark & $\mcO\left(\frac{1}{\sqrt{T}}\right) + \mcO\left(\frac{\tau^2}{T}\right)$\\
    \cmidrule(r){2-6}
	\hspace{-1.3em}
	& \textcolor{magenta}{This Work} & \cmark & \cmark & \cmark & $\mcO\left(\frac{1}{\sqrt{T}}\right) + \mcO\left(\frac{\tau^2}{T}\right)$\\
	\bottomrule      
	\end{tabular}     
	}
\end{table*}

FedBuff~\cite{nguyen2022federated} is shown to converge for the class of smooth and non-convex objective functions under the \textit{boundedness} of the gradient norm. By removing this assumption, we provide a new analysis for FedBuff and improve the existing theory by extending it to a broader class of functions. We derive our bounds based on stochastic and heterogeneous variance and the maximum delay between downloads and uploads across all the clients. Table~\ref{tab:comparison} summarizes the properties and rate of our analysis for FedBuff algorithm alongside and provides a comparison with existing analyses for FedAsync~\cite{xie2019asynchronous} and FedAvg~\cite{konevcny2016federated,mcmahan2017communication}. The rates reflect the complexity of the number of updates performed by the central server. The speed of asynchronous algorithms is faster since the constraint for synchronized updates is removed in asynchronous variations. To our knowledge, this is the first analysis for (a variant of) asynchronous federated learning with no boundedness assumption on the gradient norm.

Following is an outline of the remainder of this paper. The problem setup and FedBuff algorithm are presented in Section~\ref {sec:setup}. Moreover, our convergence result and its corresponding assumptions are provided in Section~\ref {sec:setup}.
We state detailed proof of our result in section~\ref{sec:analysis}. Finally, we conclude remarks and prospects for future research in Section~\ref{sec:conclusion}.

\section{Problem Setup, Algorithm, \& Main Result}\label{sec:setup}

In this section, we first state the problem setup, and after explaining the FedBuff algorithm~\cite{nguyen2022federated}, we present our main result along with the underlying assumptions.

\noindent \textbf{$\diamond$ Problem Setup}: We consider a set of $n$ clients and one server, where each client $i\in[n]$ owns a private function $f_i:\bbR^d \to \bbR$ and the goal is to jointly minimize the average local cost functions via finding a $d$-dimensional parameter $w\in\bbR^d$ that
\begin{align}\label{eq:fl}
\begin{split}
    \min_{w\in\bbR^d} f(w)&\coloneqq\frac{1}{n}\sum\limits_{i=1}^{n}f_i(w),\\
    \text{with}\quad f_i(w) &\coloneqq \bbE_{\xi_i\sim p_i} [\ell_i(w,\xi_i)],
\end{split}
\end{align}
where $\ell_i:\bbR^d\times \mcS_i \to \bbR$ is a cost function that determines the prediction error of $w$ over a single data point $\xi_i\in\mcS_i$ on user $i$, and $p_i$ represents user $i$'s data distribution over $\mcS_i$, for $i\in[n]$. In the above definition, $f_i(\cdot)$ is the local cost function of client $i$, and $f(\cdot)$ denotes the global (average) cost function which the clients try to collaboratively minimize. Now, let $\mcD_i$ be a data batch sampled from $p_i$. Similar to~\eqref{eq:fl}, we denote the stochastic cost function $\tilde{f}_i(w,\mcD_i)$ as follows:
\begin{align}\label{eq:fl-stoch}
\tilde{f}_i(w,\mcD_i) &\coloneqq \frac{1}{|\mcD_i|} \sum\limits_{\xi_i\in\mcD_i}\ell_i(w,\xi_i).
\end{align}
Minimization of~\eqref{eq:fl} by having access to an oracle of samples and its variants are extensively studied for many different frameworks~\cite{kairouz2019advances}. Now, we are ready to explain the FedBuff.

\noindent \textbf{$\diamond$ FedBuff Algorithm}:
Let $w^{0}$ be the initialization parameter at the server. The ultimate goal is to minimize the cost function in~\eqref{eq:fl}, using an algorithm via access to the stochastic gradients. All clients can communicate with the server, and each client $i\in[n]$ communicates when its connection to the server is stable. First, let us explain the FedBuff algorithm from the client and server perspectives.

\begin{enumerate}[leftmargin=4.5mm]
    \item \textbf{Client Algorithm}: Each client $i$ requests to read the server's parameter $w\in\bbR^d$ once the connection is stable and the server is ready to send the parameter.\footnote{We drop the timestep from the parameters in the client algorithm, for clarity of exposition. We use the time notation in our analysis in Section~\ref{sec:analysis}.} There is often some delay in this step which we call the download delay. This may be originated from factors such as unstable connection, bandwidth limit, or communication failure. For example, maybe the server seeks to reduce the simultaneously active users by setting client $i$ on hold. The download delay can model all these factors. Once the parameter is received (downloaded) from the server, client $i$ performs $Q$ steps of local stochastic gradient descent starting from the downloaded model $w$ for its cost function $f_i(\cdot)$. In words, agent $i$ runs a $Q$-step algorithm (loop of size $Q$), where at each local round $q\in\{0,1,\dots Q{-}1\}$, client $i$ samples a data batch $\mcD_{i,q}$ with respect to distribution $p_i$ and performs one step of gradient descent with local stepsize $\eta>0$. Finally, agent $i$ returns the updates (the difference between the initial and final parameters) to the server. We refer to the time required to broadcast parameters to the server as the upload delay, which could have similar factors as the download delay. Agent repeats all this procedure until the server sends a termination message. Algorithm~\ref{alg:client} summarizes the pseudo-code of operations at client $i\in[n]$, where Steps~\ref{ln:user-init}-\ref{ln:user-endfor} show the local updates performed at the agent. Moreover, $\Delta_i$ in Step~\ref{ln:user-diff} denotes the difference communicated to the server.
    \begin{algorithm}[t]
        \caption{FedBuff (\textbf{Client} $i$)}
        \begin{algorithmic}[1]
        \STATE{\textbf{input:} number of local steps $Q$, local stepsize $\eta$.}
        \REPEAT
        \STATE{read $w$ from the server} \COMMENT{download phase}
        \STATE{$w_{i,0} \gets w$}\label{ln:user-init}
        \FOR{$q=0$ to $Q{-}1$}
        \STATE{sample a data batch $\mcD_{i,q}$}
        \vspace{0.05em}
        \STATE{
        $w_{i,q+1} \gets w_{i,q} - \eta \nabla \tilde{f}_i(w_{i,q},\mcD_{i,q})$}\label{ln:user-update}
        \ENDFOR\label{ln:user-endfor}
        \STATE{$\Delta_i \gets w_{i,0} - w_{i,Q}$}\label{ln:user-diff}
        \STATE{client $i$ broadcasts $\Delta_{i}$ to the server}\COMMENT{upload phase}
        \UNTIL{not interrupted by the server}
        \end{algorithmic}
        \label{alg:client}
    \end{algorithm}
    \item \textbf{Server Algorithm}: The server considers an initialization for parameter $w^0\in\bbR^d$. Then, starting from timestep $t=0$, the server repeats an iterative procedure in addition to sending its parameters to the clients upon their request. Algorithm~\ref{alg:server} describes the server operations in FedBuff. In a nutshell, the algorithm consists of two parts, (i) secure aggregation of client updates in a buffer with size $K\geq 1$, and (ii) update the parameters using the aggregated updates. In other words, let $k,t$ respectively denote the indices associated with buffer and server updates.\footnote{As explained in~\cite{nguyen2022federated}, the buffer and secure aggregation may be performed on a secure channel which prevents the server from observing individual local updates received from the clients.} The server starting from $t=0$, receives updates broadcast by the agents asynchronously depending on their upload \& download delays as well as the time required for $Q$ local updates. A secure buffered aggregates these updates, up to $K$ separate updates received by the clients in $\overline{\Delta}^0$, initially set to zero. By indexing $k$, we keep track of uploaded updates on the server. When the buffer saturates of $K$ different updates, the server uses the aggregator parameter $\overline{\Delta}^0$ and updates its parameter $w^0$ according to line~\ref{ln:server-update} of Algorithm~\ref{alg:server}. Then, the server increases its update counter $t$ and removes all updates from the buffer, i.e., $k=0$. In this algorithm, we denote the agent which sends the $k$-th update at round $t$ by index $i_{t,k}\in[n]$.
    Basically, server repeats Steps~\ref{ln:server-download}-\ref{ln:server-endif} until some convergence criteria be satisfied. After the convergence, the server sends a termination message to all the clients.
\end{enumerate} 
As we described above, the crucial novelty of this algorithm is on the server side, where the server operations, with the help of a secure buffered aggregation, control the staleness and prevent unnecessary access to individual updates. Note that for $K=1$, the presented algorithm reduces to vanilla asynchronous federated learning with no buffer aggregation. Figure~\ref{fig:schedule} illustrates the update schedule for FedBuff and provides a comparison with the asynchronous updates in FedAvg~\cite{mcmahan2017communication}. As shown on the left of Figure~\ref{fig:schedule}, the vertical lines with light blue color are associated with uploaded updates. Note that the buffer size is $K=2$ in this example. These vertical lines are of two types, (i) solid or (ii) hatched. The solid lines reflect the time the buffer is full, so the server performs an update. Contrary to FedBuff, under the synchronous updates (as shown in the right figure), the server should halt the training procedure until all clients selected within one round receive the updates.

\begin{algorithm}[t]
        \caption{FedBuff (\textbf{Server})}
        \begin{algorithmic}[1]
        \STATE{\textbf{input:} model $w^{0}$, server stepsize $\beta$, buffer size $K$}
        \STATE{$t\gets 0$, $k \gets 0$}
        \STATE{$\overline{\Delta}^{0}\gets 0$}
        \REPEAT
            \IF{the server receives an update $\Delta_{i_{t,k}}$ from some client $i_{t,k}{\in}[n]$}\label{ln:server-download}
            \STATE{$\overline{\Delta}^{t} \gets \overline{\Delta}^{t}  + \Delta_{i_{t,k}}$}
            \STATE{$k \gets k  + 1$}
            \IF{$k = K$}
            \STATE{$w^{t+1} \gets w^{t} - \beta \overline{\Delta}^{t}$}\label{ln:server-update}
            \STATE{$k \gets 0$}
            \STATE{$t \gets t+1$}
            \STATE{$\overline{\Delta}^{t} \gets 0$}
            \ENDIF 
            \ENDIF\label{ln:server-endif}
        \UNTIL{not converged}
        \end{algorithmic}
        \label{alg:server}
    \end{algorithm}

\begin{figure*}
    \centering
    \includegraphics[width=0.49\linewidth]{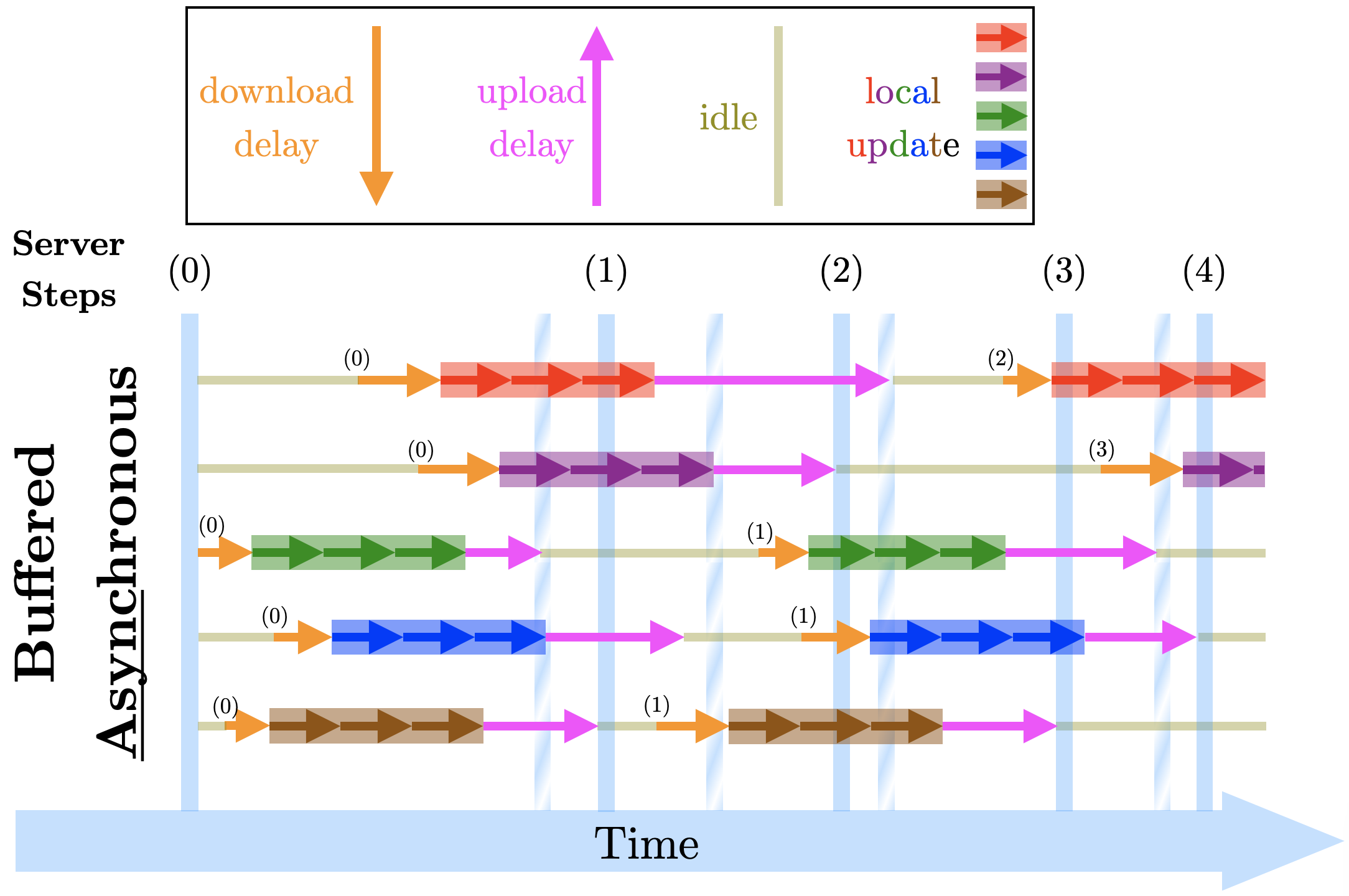}
    \includegraphics[width=0.49\linewidth]{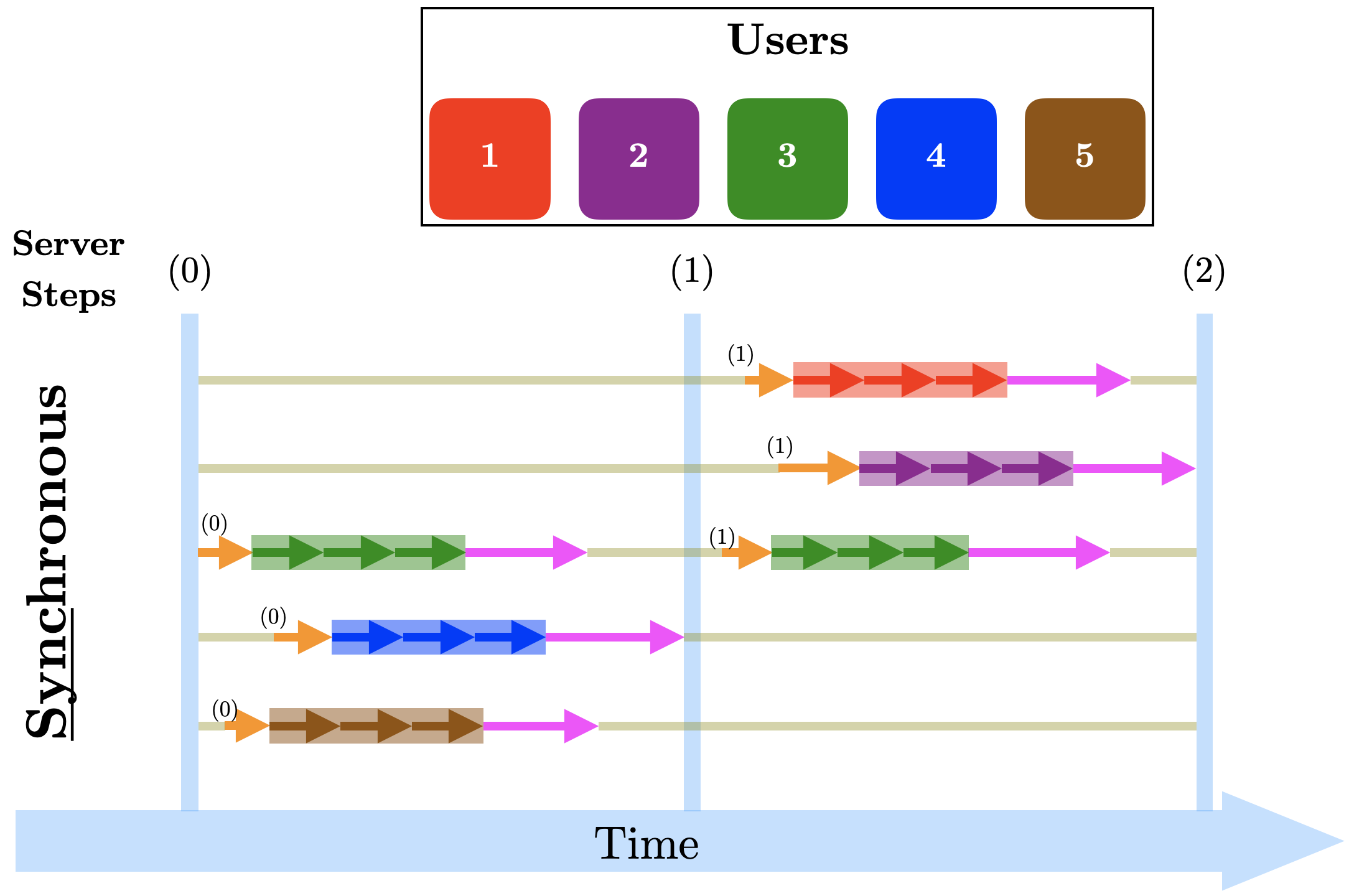}
    \caption{Communication and update schedule for synchronous and buffered asynchronous aggregation: The demonstrated setup in this example contains~$n=5$ agents, with~$Q=3$ local updates, buffer size $K=2$ for FedBuff~\cite{nguyen2022federated}, and sampling rate~$0.6$ for FedAvg~\cite{mcmahan2017communication}.}
    \label{fig:schedule}
\end{figure*}

Next, we present our assumptions on staleness, bounded stochasticity, and population diversity (heterogeneity).

\noindent \textbf{$\diamond$ Assumptions \& Main Result}:
Here, we present our main result alongside a few standard assumptions.
First, to be coherent with the proof in~\cite{nguyen2022federated}, let us denote $\tau_i^t$ to be the timestep of the last downloaded parameter on client $i\in[n]$ up to the $t$-th update at the server. We are ready to introduce the assumptions in our analysis for FedBuff, i.e., Algorithms~\ref{alg:client} \&~\ref{alg:server}.
\begin{assumption}[Bounded Staleness]\label{assump:staleness}
For all clients, $i\in[n]$ and server steps $t\geq 0$, the staleness or effective delay between the download and upload steps is bounded by some constant $\tau$, i.e.,
\begin{align}\label{eq:staleness}
    \sup_{t\geq 0}\max_{i\in[n]} \left\lvert t-\tau_{i}^{t}\right\rvert\leq \tau,
\end{align}
and the server receives updates uniformly, i.e., $i_{t,k} \sim \mathrm{Uniform}([n])$.
\end{assumption}
Note that $\tau_i^t$ is the timestep of the last parameter downloaded via agent $i$ up to timestep $t$ at the server. Therefore, if agent $i$ contributes in the $(t{+}1)$-th update, i.e., $i_{t,k} = i$, for some $k\in\{0,1,\dots, K{-}1\}$, the difference between the download and upload rounds is bounded. This is a standard assumption in the analysis of asynchronous algorithms with heterogeneous data on the clients.\footnote{It is worth mentioning that Mishchenko et al.~\cite{mishchenko2022asynchronous} relaxed this assumption (to unbounded delay) for the analysis of homogeneous smooth \& strongly convex functions.}

\begin{assumption}[Smoothness]\label{assump:smoothness}
For all clients $i\in[n]$, function $f_i:\bbR^d\to\bbR$ is bounded below, differentiable, and $L$-smooth, i.e., for all $w,u\in \bbR^d$,
\begin{align}
        \left\lVert\nabla f_i(w)-\nabla f_i(u)\right\rVert\leq L\lVert w-u\rVert\label{eq:smoothness}\\
        f_i^\star\coloneqq\min_{w\in\bbR^d} f_i(w) > -\infty.\label{eq:lower-bound}
\end{align}
\end{assumption}
This assumption guarantees the necessary conditions for analyzing smooth \& non-convex functions. Note that boundedness from below can be relaxed only to the global cost function $f$, i.e., it is sufficient to only assume that \mbox{$f^\star \coloneqq \min_{w\in\bbR^d} f(w) > -\infty$} in our analysis instead of~\eqref{eq:lower-bound} for all $i\in[n]$. Now, we introduce the assumptions on bounded stochasticity and heterogeneity.

\begin{assumption}[Bounded Variance]\label{assump:bounded-variance}
For all clients $i\in[n]$, the variance of a stochastic gradient $\nabla \ell_i(w,\xi_i)$ on a single data point $\xi_i\in\mcS_i$ is bounded, i.e., for all $w\in\bbR^d$
\begin{align}\label{eq:bounded-variance}
        \bbE_{\xi_i\sim p_i}\left\lVert\nabla \ell_i(w,\xi_i) - \nabla f_i(w)\right\rVert^2\leq \sigma^2.
\end{align}
\end{assumption}
This assumption is conventional in the analysis of stochastic optimization algorithms and has been used in many relevant works~\cite{stich2019local,nguyen2022federated,assran2020advances,khaled2020tighter,wang2020tackling,koloskova2019decentralized2,koloskova2022sharper}.
Note that as we defined the stochastic loss in~\eqref{eq:fl-stoch} and used the stochastic gradients in Step~\ref{ln:user-update}, we also need to show the stochastic variance for the gradients of the sampled batches. For simplicity, let us assume that all batch sizes are of size at least $b$, therefore according to~\eqref{eq:bounded-variance}, we have:
\begin{align}\label{eq:bounded-variance-batch}
\bbE_{p_i}\left\lVert\nabla\tilde{f}_i(w,\mcD_i)-\nabla f_i(w)\right\rVert^2 \leq \frac{\sigma^2}{|\mcD_i|}\leq \hat{\sigma}^2 \coloneqq \frac{\sigma^2}{b}.
\end{align}

\begin{assumption}[Bounded Population Diversity]\label{assump:bounded-heterogeneity} For all $w\in\bbR^d$, the gradients of local functions $f_i(w)$ and the global function $f(w)$ satisfy the following property:
\begin{align}\label{eq:bounded-heterogeneity}
        \frac{1}{n}\sum\limits_{i=1}^{n}\lVert\nabla f_i(w) - \nabla f(w)\rVert^2\leq \gamma^2.
\end{align}
\end{assumption}
In our analysis, we work with heterogeneous cost functions. Therefore, it is a reasonable and conventional assumption to assume that the boundedness of the population diversity~\cite{nguyen2022federated,dinh2020personalized,fallah2020personalized}. The inequality in~\ref{eq:bounded-heterogeneity} measures the variance of local full gradients from the average full gradient, which resembles to the expressions in~\eqref{eq:bounded-variance} \&~\eqref{eq:bounded-variance-batch}. The authors of~\cite{fallah2020personalized} discusses the connection of this bound to the similarity of local data distributions $p_i$, for all $i\in[n]$. 

Now, we present our result under the stated assumptions.

\begin{theorem}\label{thm:main}
Let Assumptions~\ref{assump:staleness}-\ref{assump:bounded-heterogeneity} hold, $\beta=\frac{1}{K}$, and $\eta=\frac{1}{Q\sqrt{LT}}$. Then, the following property holds for the joint iterates of Algorithms~\ref{alg:client} and~\ref{alg:server}: for any timestep \mbox{$T\geq 160L(Q{+}7)(\tau{+}1)^3$} at the server
\begin{align*}
\frac{1}{T}\sum_{t=0}^{T-1} \,\bbE\left\lVert\nabla f\left(w^t\right)\right\rVert^2 &\leq \frac{8\sqrt{L}\left(f(w^0){-}f^\star\right)}{\sqrt{T}}\\
&+ \frac{16\sqrt{L}\left(\frac{\sigma^2}{b} +\gamma^2\right)}{\sqrt{T}}\\
&+\frac{320 L(Q{+}1)(\tau^2{+}1)\left(\frac{\sigma^2}{b} +n\gamma^2\right)}{T}.
\end{align*}
\end{theorem}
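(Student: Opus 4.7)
The plan is to follow the standard descent-lemma template used in analyses of asynchronous SGD, but with every appeal to a ``bounded gradient'' assumption replaced by a careful coupled bookkeeping of local drift and staleness drift. The starting point is the server recursion $w^{t+1} = w^t - \beta \overline{\Delta}^t$ with $\overline{\Delta}^t = \sum_{k=0}^{K-1} \eta \sum_{q=0}^{Q-1} \nabla \tilde{f}_{i_{t,k}}(w_{i_{t,k},q},\mcD_{i_{t,k},q})$, where each inner trajectory is initialized at $w_{i_{t,k},0} = w^{\tau_{i_{t,k}}^t}$. Applying $L$-smoothness of $f$ and taking expectation over the uniform client choice and the minibatch noise, I would decompose $\bbE\langle \nabla f(w^t), \overline{\Delta}^t \rangle$ into the ideal-progress term proportional to $\eta K Q \|\nabla f(w^t)\|^2$ plus two error terms: one involving $\nabla f(w^t) - \nabla f(w^{\tau_i^t})$ (the staleness error) and one involving $\nabla f_i(w_{i,q}) - \nabla f_i(w_{i,0})$ (the local drift error), each controlled via $L$-smoothness.

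The key technical step is bounding the two mean-square drifts $\bbE\|w_{i,q}-w_{i,0}\|^2$ and $\bbE\|w^t - w^{\tau_i^t}\|^2$ without using $\|\nabla f_i\|\le G$. For the local drift I would unroll the $Q$-step inner loop using $\|a+b\|^2 \le (1+\alpha)\|a\|^2 + (1+1/\alpha)\|b\|^2$ and Assumptions~\ref{assump:bounded-variance}--\ref{assump:bounded-heterogeneity} to peel off $\hat\sigma^2$ and $\gamma^2$, leaving a residual proportional to $\|\nabla f(w^{\tau_i^t})\|^2$; for the staleness drift I would sum at most $\tau$ server updates and similarly express each $\|\overline{\Delta}^s\|^2$ via local-drift plus $\|\nabla f(w^s)\|^2$. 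This produces a small coupled linear system in the two drifts whose closure requires $\eta^2 L^2 Q^2(\tau+1)^2$ to be strictly less than a constant; this inequality is precisely where the threshold $T \ge 160L(Q+7)(\tau+1)^3$ enters once $\eta = 1/(Q\sqrt{LT})$ is substituted. The resulting bounds take the schematic form
\begin{align*}
\bbE\|w_{i,q}-w_{i,0}\|^2 &\lesssim \eta^2 Q(\hat\sigma^2 + \gamma^2) + \eta^2 Q^2 \bbE\|\nabla f(w^{\tau_i^t})\|^2,\\
\bbE\|w^t - w^{\tau_i^t}\|^2 &\lesssim \eta^2 KQ(\tau{+}1)(\hat\sigma^2 + n\gamma^2) + \eta^2 Q^2(\tau{+}1)\sum_{s=t-\tau}^{t-1}\bbE\|\nabla f(w^s)\|^2.
\end{align*}

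Substituting these into the descent inequality, summing over $t = 0,\dots,T-1$, and telescoping $f(w^t)$ yields the claim after two clean-ups: the cross term $\sum_t \sum_{s=t-\tau}^{t-1}\bbE\|\nabla f(w^s)\|^2$ is bounded by $\tau \sum_s \bbE\|\nabla f(w^s)\|^2$ (a $\tau$-shift of the same sum that appears on the LHS), and with $\beta = 1/K$, $\eta = 1/(Q\sqrt{LT})$ the overall coefficient of $\frac{1}{T}\sum_t \bbE\|\nabla f(w^t)\|^2$ on the right becomes at most $1/2$ under the stated threshold on $T$; rearranging gives the three terms in the theorem, with $f(w^0)-f^\star$ coming from telescoping, the $\sqrt{L}(\hat\sigma^2+\gamma^2)/\sqrt{T}$ term from the stochastic and heterogeneity noise within the descent step, and the $L(Q{+}1)(\tau^2{+}1)(\hat\sigma^2+n\gamma^2)/T$ term from the drift bounds. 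The main obstacle is exactly this coupling: without a bounded-gradient crutch, the local-drift and staleness-drift recursions do not close individually, and obtaining the stated constants requires tracking the $(1+\alpha)^Q$ factors of the unrolled inner loop and the $\tau$-shift in the outer telescoping together, which is what forces the cubic $(\tau+1)^3$ dependence in the lower bound on $T$.
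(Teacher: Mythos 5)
Your proposal follows essentially the same route as the paper's proof: the descent lemma under $L$-smoothness, expectation over the uniform client draw, a decomposition that peels off stochastic noise, heterogeneity, local drift, and the gradient norm in place of any bounded-gradient bound, then closure of the coupled local-drift/staleness-drift recursions under a stepsize condition, followed by telescoping with the $\tau$-shifted gradient sums absorbed into the left-hand side. The one quantitative slip is the stated closure condition: it should read $\eta^2 L^2 Q^3(\tau{+}1)^3 \lesssim 1$ rather than $\eta^2 L^2 Q^2(\tau{+}1)^2$ (the paper's $\phi$ carries a $Q^2(1{+}2Q)$ factor and the nested staleness sum contributes $\tau(\tau{+}1)^2$), and it is the former that produces the threshold $T \geq 160L(Q{+}7)(\tau{+}1)^3$ once $\eta = 1/(Q\sqrt{LT})$ is substituted.
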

We present the proof for Theorem~\ref{thm:main} in Section~\ref{sec:analysis}.

The above theorem states the convergence of the FedBuff algorithm to a first-order stationary point. This result states a convergence rate of \mbox{$\mcO\left(\frac{1}{\sqrt{T}}\right)+\mcO\left(\frac{\tau^2}{T}\right)$}, where the term affected by the maximum delay (second term) decays faster, hence the same convergence complexity as the synchronized counterpart. Note that this rate states the number of updates occurring on the server (iteration complexity), which in the case of asynchronous updates, practically converges much faster ($3.3 \times$ according to~\cite{nguyen2022federated}) than synchronized updates.

\begin{remark}
The choice of $\beta$ in Theorem~\ref{thm:main} is an arbitrary option that implies the rate in the theorem statement. The convergence proof holds for any choices of $\beta$, such that $\beta K = \mcO\left(1\right)$.
\end{remark}

\begin{remark}
In our analysis for Theorem~\ref{thm:main}, we considered bounded population diversity in Assumption~\ref{assump:bounded-heterogeneity}. One can see that by relaxing this assumption to a stronger variant
\begin{align}\label{eq:bounded-heterogeneity}
        \max_{i\in[n]}\sup_{w\in\bbR^d}\lVert\nabla f_i(w) - \nabla f(w)\rVert^2\leq \gamma^2,
\end{align}
i.e., uniformly bounded heterogeneity\footnote{This stronger assumption is considered in the analysis of works such as~\cite{dinh2020personalized}[Assumption 3] and~\cite{wang2021field}[6.1.1  Assumptions and Preliminaries, (vii)])}, $n\gamma^2$ can be replaced with $\gamma^2$ in the third term of the rate.
\end{remark}

Next, we will provide detailed proof for Theorem~\ref{thm:main}.

\section{Convergence Result}\label{sec:analysis}
This section provides a detailed explanation of the proof of the convergence result in Section~\ref{sec:setup}.

\begin{proof}[Proof of Theorem~\ref{thm:main}]
Before proceeding with the proof, let us state some inequalities. For any set of $m$ vectors \mbox{$\{w_i\}_{i{=}1}^m$} such that \mbox{$w_i\in\bbR^{d}$}, and a constant \mbox{$\alpha>0$}, the following properties hold: for all \mbox{$i,j\in[m]$:}
\begingroup
\allowdisplaybreaks
\begin{subequations}\label{eq:gen-ineq}
\begin{align}
    \lVert w_i + w_j \rVert^2 &\leq (1{+}\alpha)\lVert w_i\rVert^2 + (1{+}\alpha^{-1})\lVert w_j\rVert^2,\label{eq:gen-ineq-1}
    \\2\langle w_i, w_j\rangle &\leq \alpha\lVert w_i\rVert^2 + \alpha^{-1}\lVert w_j\rVert^2,\label{eq:gen-ineq-3}
    \\\left\lVert\sum\limits_{i=1}^m  w_i\right\rVert^2 &\leq m \left(\sum\limits_{i=1}^m \lVert w_i\rVert^2\right).\label{eq:gen-ineq-4}
\end{align}
\end{subequations}
\endgroup

For simplicity, let us denote $\tilde{\nabla}f_{i}\left(w\right) = \nabla \tilde{f}_{i}\left(w, \mcD_i\right)$. Therefore, at round $t$, the server updates its parameter by receiving $\overline{\Delta}^{t}$, as follows:
\begin{align}\label{eq:update-rule-delay}
    w^{t{+}1} &= w^{t} - \beta \overline{\Delta}^{t} = w^{t} - \beta \sum_{k=0}^{K{-}1}\Delta_{i_{t,k}}\nonumber\\
    &= w^{t} - \eta\beta \sum_{k=0}^{K{-}1}\sum\limits_{q=0}^{Q{-}1} \tilde{\nabla} f_{i_{t,k}}\left(w_{i_{t,k},q}^{\tau_{i_{t,k}}^t}\right).
\end{align}
Due to Assumption~\ref{assump:smoothness}, we can infer that $f$ is $L$-smooth, thus
\begin{align}\label{eq:main-0}
    f(w^{t{+}1}&) \overset{\eqref{eq:smoothness}}{\leq} f(w^{t}) + \frac{L\eta^2\beta^2}{2}\underbrace{\left\lVert \sum_{k=0}^{K{-}1}\sum\limits_{q=0}^{Q{-}1} \tilde{\nabla} f_{i_{t,k}}\left(w_{i_{t,k},q}^{\tau_{i_{t,k}}^t}\right)\right\rVert^2}_{=: S_1}\nonumber\\
    &+\eta\beta\underbrace{\left\langle \nabla f(w^{t}), \sum_{k=0}^{K{-}1}\sum\limits_{q=0}^{Q{-}1} \tilde{\nabla} f_{i_{t,k}}\left(w_{i_{t,k},q}^{\tau_{i_{t,k}}^t}\right) \right\rangle}_{=: S_2}
\end{align}
We first provide a lower bound on term $S_2$ in~\eqref{eq:main-0}. Let us denote \mbox{$\tilde{g}_i^t=\sum_{q{=}0}^{Q{-}1} \tilde{\nabla}f_i(w_{i,q}^{\tau_i^t})$}, \mbox{$\tilde{g}^t=\frac{1}{n}\sum_{i{=}1}^n\tilde{g}_i^t$}, \mbox{$g_i^t=\sum_{q{=}0}^{Q{-}1}\nabla f_i(w_{i,q}^{\tau_i^t})$}, and \mbox{$g^t=\frac{1}{n}\sum_{i{=}1}^n g_i^t$}. Therefore,
\begin{align}\label{eq:s1}
&\bbE\left[S_2\right] = \bbE\left[\bbE_{i_{t,k}}\left\langle\nabla f(w^{t}), \sum_{k=0}^{K{-}1}\sum\limits_{q=0}^{Q{-}1} \tilde{\nabla} f_{i_{t,k}}\left(w_{i_{t,k},q}^{\tau_{i_{t,k}}^t}\right)\right\rangle\right]\nonumber\\
&= \bbE\left\langle \nabla f(w^{t}),  \frac{1}{n}\sum\limits_{i=1}^{n}\sum_{k=0}^{K{-}1} \bbE_{p_i}\left[\tilde{g}_{i}^t\right]\right\rangle\\
&= KQ\,\bbE\left\lVert\nabla f(w^{t})\right\rVert^2 + K\left[\bbE\left\langle \nabla f(w^{t}),  g^t-Q\nabla f(w^{t})\right\rangle\right]\nonumber\\
\overset{\eqref{eq:gen-ineq-3}}&{\geq} \frac{K(2Q{-}1)}{2}\,\bbE\left\lVert\nabla f(w^{t})\right\rVert^2 - \frac{K}{2}\bbE\left\lVert g^t-Q\nabla f(w^{t})\right\rVert^2\nonumber.
\end{align}
Moreover, the following holds for $S_1$ in~\eqref{eq:main-0}:
\begin{align}\label{eq:s2}
\bbE\left[S_1\right] &= \bbE\left[\bbE_{i_{t,k}}\left\lVert \sum_{k=0}^{K{-}1}\sum\limits_{q=0}^{Q{-}1} \tilde{\nabla} f_{i_{t,k}}\left(w_{i_{t,k},q}^{\tau_{i_{t,k}}^t}\right)\right\rVert^2\right]\nonumber\\
&= \frac{1}{n}\bbE\left[\sum\limits_{i=1}^n\left\lVert \sum_{k=0}^{K{-}1}\sum\limits_{q=0}^{Q{-}1} \tilde{\nabla} f_{i}\left(w_{i,q}^{\tau_{i}^t}\right)\right\rVert^2\right]\\
&= \frac{K^2}{n}\,\sum\limits_{i=1}^n \bbE\left\lVert\sum\limits_{q=0}^{Q{-}1} \tilde{\nabla} f_{i}\left(w_{i,q}^{\tau_{i}^t}\right)\right\rVert^2 = \frac{K^2}{n}\,\sum\limits_{i=1}^n\bbE\left\lVert \tilde{g}_i^t\right\rVert^2.\nonumber
\end{align}
Now, according to~\eqref{eq:main-0},~\eqref{eq:s1}, and~\eqref{eq:s2}, we have:
\begin{align}\label{eq:main-1}
    &\bbE f\left(w^{t{+}1}\right) \leq \bbE f(w^{t}) -\frac{\eta\beta K(2Q{-}1)}{2}\bbE\left\lVert\nabla f(w^{t})\right\rVert^2\\
    &+ \frac{\eta\beta K}{2}\bbE\underbrace{\left\lVert g^t-Q\nabla f(w^{t})\right\rVert^2}_{=: S_3}+\frac{L\eta^2\beta^2K^2}{2n}\bbE\underbrace{\left[\sum\limits_{i=1}^n \left\lVert \tilde{g}_i^t\right\rVert^2\right]}_{=: S_4},\nonumber
\end{align}
where we bound $S_3, S_4$ as follows:
\begin{align}\label{eq:s3}
S_3 &= \left\lVert \frac{1}{n}\sum\limits_{i=1}^n\left(g_i^t-Q\nabla f_i(w^{t})\right)\right\rVert^2\nonumber\\
\overset{\eqref{eq:gen-ineq-4}}&{\leq} \frac{1}{n}\sum\limits_{i=1}^n \left\lVert g_i^t-Q\nabla f_i(w^{t})\right\rVert^2\nonumber\\
&= \frac{1}{n}\sum\limits_{i=1}^n \left\lVert \sum\limits_{q{=}0}^{Q{-}1}\left[\nabla f_i\left(w_{i,q}^{\tau_i^t}\right)-\nabla f_i(w^{t})\right]\right\rVert^2\nonumber\\
\overset{\eqref{eq:gen-ineq-4}}&{\leq}\frac{Q}{n}\sum\limits_{i=1}^n\sum\limits_{q=0}^{Q{-}1} \left\lVert\nabla f_i\left(w_{i,q}^{\tau_i^t}\right)-\nabla f_i(w^{t})\right\rVert^2,
\end{align}
and
\begin{alignat}{2}\label{eq:s4}
S_4 &=\sum\limits_{i=1}^n \Big\lVert \sum\limits_{q{=}0}^{Q{-}1}\tilde{\nabla}f_i\left(w_{i,q}^{\tau_i^t}\right)\Big\rVert^2\nonumber\\
\overset{\eqref{eq:gen-ineq-4}}&{\leq} Q\sum\limits_{i=1}^n\sum\limits_{q{=}0}^{Q{-}1} \Big\lVert\tilde{\nabla}f_i\left(w_{i,q}^{\tau_i^t}\right)\Big\rVert^2\nonumber\\
&= Q\sum\limits_{i=1}^n\sum\limits_{q{=}0}^{Q{-}1} \Big\lVert\tilde{\nabla}f_i\left(w_{i,q}^{\tau_i^t}\right) - \nabla f_i\left(w_{i,q}^{\tau_i^t}\right)\nonumber\\
&+ \nabla f_i\left(w_{i,q}^{\tau_i^t}\right) - \nabla f_i\left(w^t\right)\nonumber\\
&+ \nabla f_i\left(w^t\right) - \nabla f\left(w^t\right) + \nabla f\left(w^t\right) \Big\rVert^2\nonumber\\
\overset{\eqref{eq:gen-ineq-4}}&{\leq} 4Q\sum\limits_{i=1}^n\sum\limits_{q{=}0}^{Q{-}1}\Bigg[\Big\lVert\tilde{\nabla}f_i\left(w_{i,q}^{\tau_i^t}\right) - \nabla f_i\left(w_{i,q}^{\tau_i^t}\right)\Big\rVert^2\nonumber\\
&+ \Big\lVert\nabla f_i\left(w_{i,q}^{\tau_i^t}\right) - \nabla f_i\left(w^t\right)\Big\rVert^2\nonumber\\
& + \Big\lVert\nabla f_i\left(w^t\right) - \nabla f\left(w^t\right)\Big\rVert^2 + \Big\lVert\nabla f\left(w^t\right) \Big\rVert^2 \Bigg],
\end{alignat}
therefore, by taking expectations, we can show that:

\begin{align}\label{eq:s4-2}
\bbE[S_4]
\overset{\eqref{eq:bounded-variance-batch},~\eqref{eq:bounded-heterogeneity}}&{\leq} 4nQ^2\left[\hat{\sigma}^2 + \gamma^2 + \bbE\Big\lVert\nabla f\left(w^t\right)\Big\rVert^2 \right]\nonumber\\
&+ 4Q\sum\limits_{i=1}^n\sum\limits_{q{=}0}^{Q{-}1}\bbE\left\lVert\nabla f_i\left(w_{i,q}^{\tau_i^t}\right)-\nabla f_i(w^{t})\right\rVert^2.
\end{align}
Therefore, due to~\eqref{eq:main-1}-\eqref{eq:s4-2}, we have
\begin{align}\label{eq:main-2}
    \bbE &f\left(w^{t{+}1}\right) \leq \bbE f(w^{t})\nonumber\\ &-\left[\frac{\eta\beta K(2Q{-}1)}{2}- 2\eta^2L\beta^2K^2Q^2\right]\bbE\left\lVert\nabla f(w^{t})\right\rVert^2\nonumber\\
    &+ \frac{\eta\beta K Q}{2n}\sum\limits_{i=1}^n\sum\limits_{q{=}0}^{Q{-}1}\bbE\left\lVert\nabla f_i\left(w_{i,q}^{\tau_i^t}\right)-\nabla f_i(w^{t})\right\rVert^2\nonumber\\
    &+\frac{2\eta^2\beta^2K^2Q L}{n}\sum\limits_{i=1}^n\sum\limits_{q{=}0}^{Q{-}1}\bbE\left\lVert\nabla f_i\left(w_{i,q}^{\tau_i^t}\right)-\nabla f_i(w^{t})\right\rVert^2\nonumber\\
    &+ 2\eta^2L\beta^2K^2Q^2\hat{\sigma}^2 + 2\eta^2L\beta^2K^2Q^2\gamma^2\nonumber\\
    \overset{\eqref{eq:smoothness}}&{\leq} \bbE f(w^{t})\nonumber\\
    &-\left[\frac{\eta\beta K(2Q{-}1)}{2} - 2\eta^2L\beta^2K^2Q^2\right]\bbE\left\lVert\nabla f(w^{t})\right\rVert^2\nonumber\\
    &+ \frac{\eta\beta K Q L^2\left(1{+}4\eta\beta K L\right)}{2n} \sum\limits_{i=1}^n\sum\limits_{q{=}0}^{Q{-}1}\bbE\underbrace{\left\lVert w_{i,q}^{\tau_i^t}-w^{t}\right\rVert^2}_{=: S_5}\nonumber\\
    &+ 2\eta^2L\beta^2K^2Q^2\hat{\sigma}^2 + 2\eta^2L\beta^2K^2Q^2\gamma^2.
\end{align}
Hence, it is sufficient to bound $S_5$ in~\eqref{eq:main-2} as follows:
\begin{align}\label{eq:s5}
S_5 &= \left\lVert w^{t}-w_{i,q}^{\tau_i^t}\right\rVert^2 = \left\lVert \sum\limits_{s{=}\tau_i^t}^{t{-}1}\left(w^{s{+}1}-w^{s}\right)+w^{\tau_i^t}-w_{i,q}^{\tau_i^t}\right\rVert^2\nonumber\\
\overset{\eqref{eq:gen-ineq-1}}&{\leq}\left(1{+}\frac{1}{\beta^2K^2}\right)\left\lVert \sum\limits_{s{=}\tau_i^t}^{t{-}1}\left(w^{s{+}1}-w^{s}\right)\right\rVert^2\nonumber\\
&+\quad\left(1{+}\beta^2K^2\right)\left\lVert w^{\tau_i^t}-w_{i,q}^{\tau_i^t}\right\rVert^2\nonumber\\
\overset{\eqref{eq:gen-ineq-4},\eqref{eq:staleness}}&{\leq}\tau\left(1{+}\frac{1}{\beta^2K^2}\right)\left[\sum\limits_{s{=}t{-}\tau}^{t{-}1}\underbrace{\left\lVert w^{s{+}1}-w^{s}\right\rVert^2}_{=: S_7}\right]\nonumber\\
&+\qquad\left(1{+}\beta^2K^2\right)\underbrace{\left\lVert w^{\tau_i^t}-w_{i,q}^{\tau_i^t}\right\rVert^2}_{=: S_6}.
\end{align}

Now, we show a bound on the evolution of local updates at an arbitrary round $s\geq 0$, i.e., the distance between $w_{i,q}^s$ and $w^s$, which we will use to provide a bound on $S_7$.
\begin{align}\label{eq:s6}
&\bbE\left\lVert w_{i,q}^s - w^s\right\rVert^2= \bbE\left\lVert w_{i,q{-}1}^s - \eta\tilde{\nabla}f_i\left(w_{i,q{-}1}^s\right) - w^s\right\rVert^2\nonumber\\
&= \bbE\Big\lVert w_{i,q{-}1}^s - w^s - \eta\nabla f\left(w^s\right)\nonumber\\ &\quad-\eta\tilde{\nabla}f_i\left(w_{i,q{-}1}^s\right) + \eta\nabla f_i\left(w_{i,q{-}1}^s\right)\nonumber\\
&\quad- \eta\nabla f_i\left(w_{i,q{-}1}^s\right) + \eta\nabla f_i\left(w^s\right)\nonumber\\
&\quad- \eta\nabla f_i\left(w^s\right) + \eta\nabla f\left(w^s\right) \Big\rVert^2\nonumber\\
\overset{\eqref{eq:gen-ineq-1}}&{\leq} \left(1{+}\frac{1}{2Q}\right)\bbE\Big\lVert w_{i,q{-}1}^s - w^s\Big\rVert^2\nonumber\\
&+\hspace{1em}4(1{+}2Q)\eta^2\bbE\Bigg[\Big\lVert\tilde{\nabla}f_i\left(w_{i,q{-}1}^s\right) - \nabla f_i\left(w_{i,q{-}1}^s\right)\Big\rVert^2\nonumber\\
&\hspace{7.1em}+ \Big\lVert\nabla f_i\left(w_{i,q{-}1}^s\right) - \nabla f_i\left(w^s\right)\Big\rVert^2\nonumber\\
&\hspace{7.1em}+ \Big\lVert\nabla f_i\left(w^s\right) - \nabla f\left(w^s\right) \Big\rVert^2\nonumber\\
&\hspace{7.1em}+ \Big\lVert\nabla f\left(w^s\right)\Big\rVert^2
\Bigg]\\
\overset{\eqref{eq:smoothness},~\eqref{eq:bounded-variance-batch}}&{\leq} \left(1{+}\frac{1}{2Q}\right)\bbE\Big\lVert w_{i,q{-}1}^s - w^s\Big\rVert^2\nonumber\\
&+\hspace{1em}4(1{+}2Q)\eta^2\Bigg[\hat{\sigma}^2 + L^2\,\bbE\Big\lVert w_{i,q{-}1}^s - w^s\Big\rVert^2\nonumber\\
&\hspace{7.1em}+ \bbE\Big\lVert\nabla f_i\left(w^s\right) - \nabla f\left(w^s\right) \Big\rVert^2\nonumber\\
&\hspace{7.1em}+ \bbE\Big\lVert\nabla f\left(w^s\right)\Big\rVert^2
\Bigg].\label{eq:s6-end}
\end{align}
Note that we can select stepsize $\eta\leq\frac{1}{4L(Q{+}1)}$ such that
\begin{align}\label{eq:stepsize-1}
    \eta^2 \leq \frac{1}{8L^2Q(2Q{+}1)} \Rightarrow 4(1+2Q)\eta^2L^2 \leq \frac{1}{2Q},
\end{align}
therefore, due to~\eqref{eq:s6}-\eqref{eq:s6-end} and~\eqref{eq:stepsize-1}, we have:
\begin{align}\label{eq:s6-2}
&\underbrace{\bbE\left\lVert w_{i,q}^s - w^s\right\rVert^2}_{:= P_{i,q}^s} \leq \left(1{+}\frac{1}{Q}\right)\underbrace{\bbE\Big\lVert w_{i,q{-}1}^s - w^s\Big\rVert^2}_{:= P_{i,q{-}1}^s}\nonumber\\
&+4(1{+}2Q)\eta^2\Bigg[\hat{\sigma}^2 + \bbE\Big\lVert\nabla f_i\left(w^s\right) - \nabla f\left(w^s\right) \Big\rVert^2\nonumber\\
&\underbrace{\qquad\qquad\qquad\qquad\qquad\qquad + \bbE\Big\lVert\nabla f\left(w^s\right)\Big\rVert^2
\Bigg]}_{:= R_i^s} \Rightarrow
\end{align}
\begin{align}\label{eq:s6-2-2}
P_{i,q}^s &\leq \left(1{+}\frac{1}{Q}\right) P_{i,q{-}1}^s + R_i^s\nonumber\\
&= R_i^s\sum\limits_{k=0}^{q{-}1} \left(1{+}\frac{1}{Q}\right)^k \leq R_i^s\sum\limits_{k=0}^{Q{-}1} \left(1{+}\frac{1}{Q}\right)^k\nonumber\\
&= R_i^s \frac{\left(1{+}\frac{1}{Q}\right)^Q-1}{\left(1{+}\frac{1}{Q}\right)-1} = R_i^s Q\left[\left(1{+}\frac{1}{Q}\right)^Q-1\right]\nonumber\\
&\leq R_i^s Q(e - 1) \leq 2R_i^s Q,
\end{align}
for all $q\in[Q]$. Note that according to Algorithm~\ref{alg:server}, we have:
\begin{align}\label{eq:s7}
w^{s{+}1} &= w^s - \beta\sum\limits_{k=0}^{K{-}1}\left[w^{\tau_{i_s}^s} - w_{i_s,Q}^{\tau_{i_s}^s}\right]\Rightarrow
\end{align}
\begin{align}\label{eq:s7-2}
&\bbE\left\lVert w^{s{+}1} - w^s\right\rVert^2\leq \beta^2\,\bbE\left\lVert \sum\limits_{k=0}^{K{-}1}\left[w^{\tau_{i_s}^s} - w_{i_s,Q}^{\tau_{i_s}^s}\right]\right\rVert^2\nonumber\\
\overset{\eqref{eq:gen-ineq-4}}&{\leq} \beta^2 K^2\left[\bbE\left[\bbE_{i_s} \left\lVert w^{\tau_{i_s}^s} - w_{i_s,Q}^{\tau_{i_s}^s}\right\rVert^2\right]\right]\nonumber\\
&= \frac{\beta^2 K^2}{n}\sum\limits_{j{=}1}^n\bbE\left\lVert w^{\tau_j^s} - w_{j,Q}^{\tau_j^s} \right\rVert^2\nonumber\\
\overset{\eqref{eq:s6-2}-\eqref{eq:s6-2-2}}&{\leq} 8Q(1{+}2Q)\eta^2\beta^2K^2\hat{\sigma}^2\nonumber\\
&+\frac{8Q(1{+}2Q)\eta^2\beta^2K^2}{n}\sum_{j=1}^n\bbE\Big\lVert\nabla f_j\left(w^{\tau_j^s}\right) {-} \nabla f\left(w^{\tau_j^s}\right) \Big\rVert^2\nonumber\\
&+\frac{8Q(1{+}2Q)\eta^2\beta^2K^2}{n} \sum_{j=1}^n\bbE\Big\lVert\nabla f\left(w^{\tau_j^s}\right)\Big\rVert^2.
\end{align}
Let $\phi=8\eta^2Q^2(1{+}2Q)(1{+}\beta^2K^2)$, then according to~\eqref{eq:s5}-\eqref{eq:s7-end}, we have
\begin{align}\label{eq:s5-2}
&\frac{1}{n\phi}\sum_{i=1}^n\sum_{q=0}^{Q{-}1} \bbE[S_5]\nonumber\\
&\leq\tau\left[\sum\limits_{s{=}t{-}\tau}^{t{-}1}\left\lVert w^{s{+}1}-w^{s}\right\rVert^2\right]+\frac{1}{n}\sum_{i=1}^{n}\left\lVert w^{\tau_i^t}-w_{i,q}^{\tau_i^t}\right\rVert^2\nonumber\\
&\leq \tau^2 \hat{\sigma}^2 + \frac{\tau}{n} \sum_{s=t{-}\tau}^{t{-}1} \sum_{j=1}^n\bbE\Big\lVert\nabla f_j\left(w^{\tau_j^s}\right) - \nabla f\left(w^{\tau_j^s}\right) \Big\rVert^2\nonumber\\
&+ \frac{\tau}{n} \sum_{s=t{-}\tau}^{t{-}1} \sum_{j=1}^n\bbE\Big\lVert\nabla f\left(w^{\tau_j^s}\right)\Big\rVert^2  + \frac{1}{n}\sum_{i=1}^{n}\bbE\Big\lVert\nabla f\left(w^{\tau_i^t}\right)\Big\rVert^2 \nonumber\\
& + \hat{\sigma}^2 + \frac{1}{n}\sum_{i=1}^{n}\bbE\Big\lVert\nabla f_i\left(w^{\tau_i^t}\right) - \nabla f\left(w^{\tau_i^t}\right) \Big\rVert^2.
\end{align}
Note that according to~\eqref{eq:staleness}, we know that:
$\tau_i^t \in \{t{-}\tau\,\dots,t\}$, therefore:
\begin{align}\label{eq:s5-3}
\bbE\Big\lVert\nabla f\left(w^{\tau_i^t}\right)\Big\rVert^2 \leq \sum_{s=t{-}\tau}^{t} \bbE\Big\lVert\nabla f\left(w^s\right)\Big\rVert^2,
\end{align}
and similarly, for any $s\in\{t{-}\tau\,\dots,t\}$ and $j\in[n]$,
\begin{align}\label{eq:s5-4}
\bbE\Big\lVert\nabla f\left(w^{\tau_j^s}\right)\Big\rVert^2 \leq \sum_{u=s{-}\tau}^{s} \bbE\Big\lVert\nabla f\left(w^u\right)\Big\rVert^2.
\end{align}
Moreover, we have:
\begin{align}\label{eq:s5-5}
\Big\lVert\nabla f_j\left(w^{\tau_j^s}\right) &- \nabla f\left(w^{\tau_j^s}\right) \Big\rVert^2 \nonumber\\
&\leq \sum_{i=1}^{n}\Big\lVert\nabla f_i\left(w^{\tau_j^s}\right) - \nabla f\left(w^{\tau_j^s}\right) \Big\rVert^2.
\end{align}
Therefore, due to~\eqref{eq:s5-2}-\eqref{eq:s5-5}, we have:
\begin{align}\label{eq:s5-6}
\frac{1}{n\phi}\sum_{i=1}^n\sum_{q=0}^{Q{-}1} \bbE[S_5] \overset{\eqref{eq:s5-2}-\eqref{eq:s5-5}}&{\leq} \tau^2 \hat{\sigma}^2 + \tau^2n \gamma^2\nonumber\\
&+ \tau \sum_{s=t{-}\tau}^{t{-}1} \sum_{u=s{-}\tau}^{s} \bbE\Big\lVert\nabla f\left(w^u\right)\Big\rVert^2\nonumber\\
& + \hat{\sigma}^2 + n\gamma^2 + \sum_{s=t{-}\tau}^{t}\bbE\Big\lVert\nabla f\left(w^s\right)\Big\rVert^2\nonumber\\
&= (1{+}\tau^2)\left[\hat{\sigma}^2 + n \gamma^2\right]\nonumber\\
&+ \tau \sum_{s=t{-}\tau}^{t{-}1} \sum_{u=s{-}\tau}^{s} \bbE\Big\lVert\nabla f\left(w^u\right)\Big\rVert^2\nonumber\\
&+ \sum_{s=t{-}\tau}^{t}\bbE\Big\lVert\nabla f\left(w^s\right)\Big\rVert^2.
\end{align}
By combining~\eqref{eq:main-2} and~\eqref{eq:s5-6}, we have the following inequality:
\begin{align}\label{eq:main-3}
    &\bbE f\left(w^{t{+}1}\right) \leq \bbE f(w^{t}) + 2\eta^2L\beta^2K^2Q^2\left[\hat{\sigma}^2 + \gamma^2\right]\nonumber\\
    & -\frac{\eta\beta K}{2}\Big[(2Q{-}1) - 4\eta L\beta K Q^2\nonumber\\
    & \qquad\qquad- Q L^2(1{+}\tau^2)\left(1{+}4\eta\beta K L\right)\phi\Big]\bbE\left\lVert\nabla f(w^{t})\right\rVert^2\nonumber\\
    &+ \frac{\eta\beta K Q L^2\left(1{+}4\eta\beta K L\right)\phi}{2}\nonumber\\
    &\quad\qquad\sum_{s=t{-}\tau}^{t{-}1}\left[\bbE\Big\lVert\nabla f\left(w^s\right)\Big\rVert^2 + \tau\sum_{u=s{-}\tau}^{s} \bbE\Big\lVert\nabla f\left(w^u\right)\Big\rVert^2\right]\nonumber\\
    & + \frac{\eta\beta K Q L^2(1{+}\tau^2)\left(1{+}4\eta\beta K L\right)\phi}{2}\left[\hat{\sigma}^2 + n \gamma^2\right]\nonumber\\
    &\leq \bbE f(w^{t}) + 2\eta^2L\beta^2K^2Q^2\left[\hat{\sigma}^2 + \gamma^2\right]\nonumber\\
    & -\frac{\eta\beta K Q}{2}\Big[1 - 4\eta L\beta K Q\nonumber\\
    &\qquad\qquad- Q L^2(1{+}\tau^2)\left(1{+}4\eta\beta K L\right)\phi\Big]\bbE\left\lVert\nabla f(w^{t})\right\rVert^2\nonumber\\
    &+ \frac{\eta\beta K Q L^2\left(1{+}4\eta\beta K L\right)\phi}{2} \nonumber\\
    &\quad\qquad\sum_{s=t{-}\tau}^{t{-}1}\left[\bbE\Big\lVert\nabla f\left(w^s\right)\Big\rVert^2 + \tau\sum_{u=s{-}\tau}^{s} \bbE\Big\lVert\nabla f\left(w^u\right)\Big\rVert^2\right]\nonumber\\
    & + \frac{\eta\beta K Q L^2(1{+}\tau^2)\left(1{+}4\eta\beta K L\right)\phi}{2}\left[\hat{\sigma}^2 + n \gamma^2\right].
\end{align}
Now, we can obtain the following inequality by rearranging the terms in~\eqref{eq:main-3}:
\begin{align}\label{eq:main-4}
    &\big[1 - 4\eta L\beta K Q - Q L^2(1{+}\tau^2)\left(1{+}4\eta\beta K L\right)\phi\big]\bbE\left\lVert\nabla f(w^{t})\right\rVert^2\nonumber\\
    &-L^2\left(1{+}4\eta\beta K L\right)\phi \sum_{s=t{-}\tau}^{t{-}1}\Big[\bbE\Big\lVert\nabla f\left(w^s\right)\Big\rVert^2 \nonumber\\
    &\qquad\qquad\qquad\qquad\qquad\quad + \tau\sum_{u=s{-}\tau}^{s} \bbE\Big\lVert\nabla f\left(w^u\right)\Big\rVert^2\Big]\nonumber\\
    &\leq \frac{2\left[\bbE f(w^{t})-\bbE f\left(w^{t{+}1}\right)\right]}{\eta\beta K Q} + 4\eta\beta K Q L\left[\hat{\sigma}^2 + \gamma^2\right]\nonumber\\
    &+2 L^2(1{+}\tau^2)\left(1{+}4\eta\beta K L\right)\phi\left[\hat{\sigma}^2 + n \gamma^2\right],
\end{align}
whereby mixing the terms in~\eqref{eq:main-4}, we obtain:
\begin{align}\label{eq:main-5}
    &\big[1 - 4\eta L\beta K Q - L^2(\tau^2{+}1)\left(1{+}4\eta\beta K L\right)\phi\big]\bbE\left\lVert\nabla f(w^{t})\right\rVert^2\nonumber\\
    &\quad-L^2\left(1{+}4\eta\beta K L\right)(\tau{+}1)\phi \sum_{s=t{-}\tau}^{t{-}1}\sum_{u=s{-}\tau}^{s} \bbE\Big\lVert\nabla f\left(w^u\right)\Big\rVert^2\nonumber\\
    &\leq \frac{2\left[\bbE f(w^{t})-\bbE f\left(w^{t{+}1}\right)\right]}{\eta\beta K Q} + 4\eta\beta K Q L\left[\hat{\sigma}^2 + \gamma^2\right]\nonumber\\
    & + 2L^2(1{+}\tau^2)\left(1{+}4\eta\beta K L\right)\phi\left[\hat{\sigma}^2 + n \gamma^2\right].
\end{align}
Finally, we add~\eqref{eq:main-5}, for $t=0,1,\dots T{-}1$, and divide by $T$ to show that:
\begin{align}\label{eq:main-6}
    &\Bigg[1 - 4\eta L\beta K Q - L^2(\tau^2{+}1)\left(1{+}4\eta\beta K L\right)\phi\nonumber\\
    &\qquad\qquad-L^2\left(1{+}4\eta\beta K L\right)\tau(\tau{+}1)^2\phi\Bigg]\frac{\sum\limits_{t=0}^{T{-}1}\bbE\left\lVert\nabla f(w^{t})\right\rVert^2}{T}\nonumber\\
    &\leq \frac{2\left[f(w^{0})-\bbE f\left(w^{T}\right)\right]}{\eta\beta K Q} + 4\eta\beta K Q L\left[\hat{\sigma}^2 + \gamma^2\right]\nonumber\\
    &+ 2 L^2(1{+}\tau^2)\left(1{+}4\eta\beta K L\right)\phi\left[\hat{\sigma}^2 + n \gamma^2\right].
\end{align}
Let us fix $\beta=\frac{1}{K}$ and $\eta=\frac{1}{Q\sqrt{LT}}$. Thus, we know that the following inequality holds
\begin{align}\label{eq:stepsize-2}
\max\Big\{4\eta \beta K L Q,\, L^2(\tau^2{+}1)(1{+}4\eta\beta KL)\phi,\nonumber\\
\qquad\quad L^2\tau(\tau{+}1)^2(1{+}4\eta\beta KL)\phi\Big\} &\leq \frac{1}{4},
\end{align}
for $T\geq 160L(Q{+}7)(\tau{+}1)^3$. Note that under this choices for $\eta$ and $\beta$, we also have  $\eta\leq\frac{1}{4L(Q{+}1)}$, which we used in~\eqref{eq:stepsize-1}. Therefore, we can conclude the result in Theorem~\ref{thm:main} as follows:
\begin{align}\label{eq:main-7}
    \frac{1}{T}\sum\limits_{t=0}^{T{-}1}\bbE\left\lVert\nabla f(w^{t})\right\rVert^2 &\leq \frac{8\sqrt{L}\left(f(w^{0})-\bbE f\left(w^{T}\right)\right)}{\sqrt{T}}\nonumber\\
    &+ \frac{16\sqrt{L}\left(\hat{\sigma}^2 + \gamma^2\right)}{\sqrt{T}}\\
    &+ \frac{320 L(Q{+}1)(\tau^2{+}1)\left(\hat{\sigma}^2 + n \gamma^2\right)}{T}.\nonumber
\end{align}
\end{proof}

\section{Conclusion}\label{sec:conclusion}
This paper studied the convergence properties of asynchronous federated learning via secure buffered aggregation. By removing the boundedness assumption on the gradient norms, we presented a novel analysis of the convergence of the FedBuff algorithm, where we showed a sublinear convergence rate of \mbox{$\mcO(\epsilon^{2})+\mcO({\tau^2}\epsilon)$} to an $\epsilon$-first-order stationary solution. We also discussed the dependence of this rate on the batch size, stochasticity variance, data heterogeneity, and maximum delays. We leave the privacy analysis of Fed-Buff with gradient clipping and noise addition to future studies. Also, the communication complexity of this method and the extensions to decentralized setups remain for future work.

\bibliographystyle{IEEEbib}
\bibliography{ref}

\end{document}